\documentclass[10pt]{article}
\usepackage{iclr2026_conference,times}
\usepackage[utf8]{inputenc}
\usepackage[T1]{fontenc}
\usepackage{amsmath,amssymb,amsthm,mathtools}
\usepackage{graphicx}
\usepackage{booktabs}
\usepackage{array}
\usepackage{multirow}
\usepackage{enumitem}
\usepackage{listings}
\usepackage{float}
\usepackage{xcolor}
\usepackage{url}
\graphicspath{{figures/}}

\iclrfinalcopy
\title{Signed Compression Progress on a Sealed Audit\\
is Goodhart-Resistant}
\author{Ayush Mittal \qquad Dhruv Gupta}

\newtheorem{theorem}{Theorem}
\newtheorem{proposition}{Proposition}
\newtheorem{corollary}{Corollary}
\newtheorem{definition}{Definition}

\newcommand{\E}{\mathcal E}
\newcommand{\F}{\mathcal F}

\newcommand{\Q}{\mathsf Q}
\newcommand{\Pstream}{\mathsf P}
\newcommand{\R}{\mathbb R}

\newcommand{\hE}{\widehat{\mathcal E}}
\newcommand{\CP}{\mathrm{CP}}

\newcommand{\lean}[1]{\texttt{\def\_{\textunderscore\allowbreak}#1}}

\lstdefinestyle{leanstyle}{
  basicstyle=\ttfamily\footnotesize,
  breaklines=true,
  columns=fullflexible,
  frame=single,
  backgroundcolor=\color{black!2},
  xleftmargin=0.5em,
  xrightmargin=0.5em
}

\begin{document}
\maketitle

\begin{abstract}
Compression progress is a long-standing proposal for intrinsic motivation: reward an agent when its world model becomes better at predicting or compressing experience. The folk claim is that this reward is ``credible'' because it is paid only for learning. We make this precise and prove it. If intrinsic reward is the signed decrease of a fixed sealed-audit loss,
\[
 r_t^{\rm audit}=\E(\theta_{t-1})-\E(\theta_t),
\]
then cumulative reward telescopes exactly to endpoint audit improvement. Consequently no policy can drive reward upward indefinitely while true audit performance stagnates or degrades. For finite audit panels, the same result holds with a sharp false-positive budget: cumulative empirical reward is at most true audit improvement plus $2\Delta_n(\F,\delta)$, where $\Delta_n$ is the uniform audit deviation of the model class. This is horizon-free: adaptivity over time costs nothing once the sealed panel uniformly controls the class.

The theorem also identifies the failure modes. The guarantee disappears if progress is clipped, if progress is scored on the agent's own stream, if a reusable finite panel is exposed to a high-capacity model, or if a neural class makes $\Delta_n$ vacuous. We provide a Lean 4 mechanization of the structural core (telescoping, finite-audit Goodhart resistance conditional on uniform deviation, finite Gibbs nonnegativity, and the entropy-floor budget) and an experiment suite on ARC-TGI grid-transformation generators plus adaptive holdout attacks. The experiments confirm the theory: finite-audit deviation scales as $n^{-0.527}$; signed progress resists clip-farming, stream leakage, and noisy-TV curiosity; naive reusable audits are exploitable by black-box scalar feedback, while fresh subsampling, laddering, rounding, and one-shot release keep the attack below the $2\Delta_n$ threshold. These results delimit when compression progress is Goodhart-resistant: \emph{signed compression progress on a sealed audit is an accounting signal of genuine improvement.}
\end{abstract}

\section{Introduction}

Intrinsic motivation based on prediction or compression progress appears in Schmidhuber's work on artificial curiosity and the later compression-progress theory of interestingness \cite{schmidhuber1991curiosity,schmidhuber2008compression,schmidhuber2010formal}. Reward is paid only when the agent's model improves. This distinguishes learnable regularity from incompressible noise and should avoid noisy-TV pathologies that trap raw prediction-error bonuses.

But the informal statement is too broad. A learning agent can improve on its own recently selected stream while becoming worse on the target distribution. It can forget and relearn the same facts if reward clips away negative progress. It can overfit a finite validation set if repeated scalar feedback leaks information. It can exploit a high-capacity model class until a nominal holdout is no longer a holdout. These are the Goodhart channels that matter for intrinsic rewards in continual learning and recursive self-improvement.

We isolate the representation under which the compression-progress claim becomes true. Let $\Q$ be a fixed audit distribution and let $\E(\theta)=\mathbb E_{z\sim \Q}\ell(\theta,z)$ be audit log-loss or any lower-bounded proper scoring loss. Define signed audit compression progress by
\begin{equation}
  r_t^{\rm audit}=\E(\theta_{t-1})-\E(\theta_t).
\end{equation}
Then the entire reward history is an endpoint identity:
\begin{equation}
  \sum_{t=1}^T r_t^{\rm audit}=\E(\theta_0)-\E(\theta_T).
\end{equation}
Thus any apparent long-run reward must be paid for by a genuine reduction in audit loss. Goodhart resistance here is a property of the measurement frame: it holds because progress is scored against a fixed audit loss.

\paragraph{Contributions.}
We make four contributions.
First, we define \emph{budgeted Goodhart resistance}: a progress signal is credible up to a finite false-positive budget $\Gamma$ if cumulative reward cannot exceed true audit improvement by more than $\Gamma$. Exact sealed-audit compression progress has $\Gamma=0$; finite panels have $\Gamma=2\Delta_n$.
Second, we mechanize the structural core in Lean 4: exact telescoping, finite-audit Goodhart resistance under a uniform-deviation event, finite Gibbs nonnegativity, and an entropy-floor theorem for incompressible components.
Third, we separate the reward signal from the scheduler: audit compression progress supplies the credible reward, while multiplicative weights / EXP3 provides allocation.
Fourth, we run a focused experiment suite using ARC-TGI task generators, RND \cite{burda2018rnd}, ICM \cite{pathak2017icm}, prediction-error curiosity, finite-audit concentration checks, stream leakage, clipping cycles, reusable-panel memorization, and black-box scalar-feedback holdout attacks.

\begin{figure}[H]
  \centering
  \includegraphics[width=0.96\linewidth]{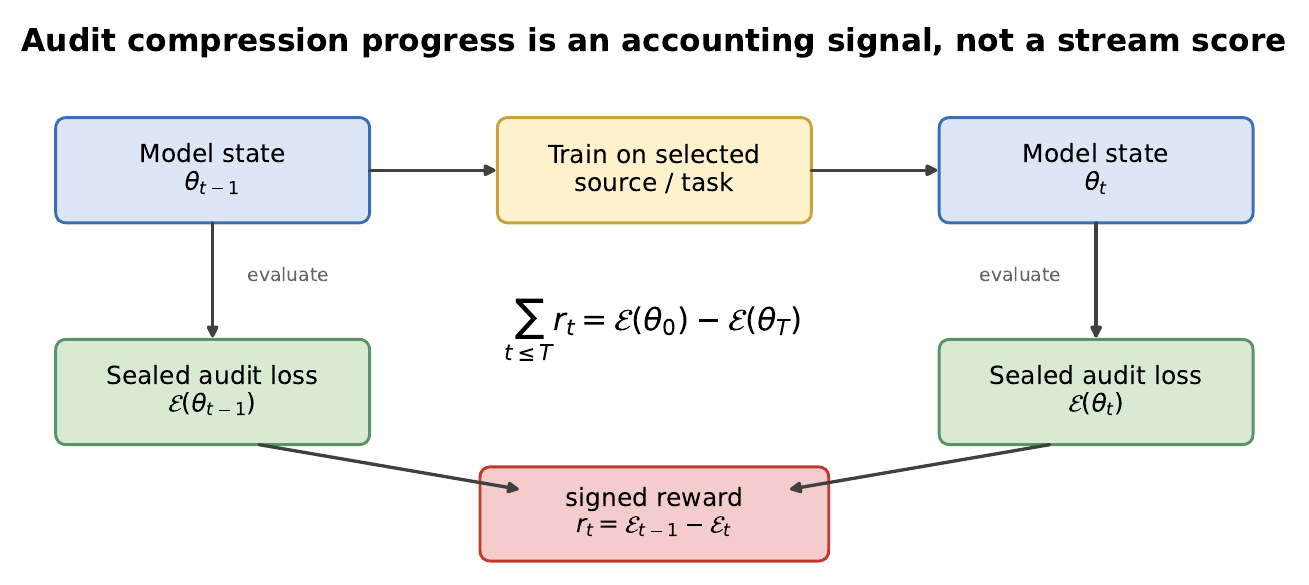}
  \caption{The measurement frame. Training data may be selected adaptively, but reward is computed only from the signed change in a fixed audit loss. This makes intrinsic reward an endpoint accounting identity over the sealed audit.}
  \label{fig:accounting}
\end{figure}

\section{Related work}

\paragraph{Goodhart's law and reward hacking.} Optimizing a proxy until it diverges from the target it stands for is Goodhart's law \cite{manheim2018goodhart}, and its learned-agent form is reward hacking or specification gaming \cite{amodei2016concrete,lehman2020creativity,skalse2022reward,pan2022reward}. Most of this literature characterizes when a fixed proxy is unsafe to optimize. We hold the measurement frame fixed and ask a quantitative question: when can a progress signal exceed true improvement, and by how much? Budgeted Goodhart resistance answers this for compression-progress rewards with a finite false-positive budget.

\paragraph{Intrinsic motivation and compression progress.} Compression progress as a driver of curiosity and creativity originates with Schmidhuber \cite{schmidhuber1991curiosity,schmidhuber2008compression,schmidhuber2010formal}. Prediction-error and feature-prediction bonuses \cite{pathak2017icm} and random network distillation \cite{burda2018rnd} are the standard deep reinforcement learning realizations. These bonuses score error or novelty on the agent's own stream. Audit compression progress scores signed error reduction on a sealed distribution, which is what produces the endpoint accounting identity and the entropy floor.

\paragraph{Adaptive data analysis and holdout reuse.} Repeated queries against a finite validation set erode its guarantees; the reusable holdout and the Ladder mechanism bound this erosion \cite{dwork2015holdout,blum2015ladder}, and empirical studies measure it on real leaderboards and benchmarks \cite{recht2019imagenet,roelofs2019metaanalysis}. The finite-panel budget $2\Delta_n$ is the audit-CP form of the same phenomenon, and our adaptive scalar-feedback attack instantiates it together with the standard release defenses.

\paragraph{Proper scoring rules.} Log-loss is a strictly proper scoring rule \cite{gneiting2007scoring}, so its population minimizer is the true conditional distribution. The entropy floor (Theorem~\ref{thm:entropy-floor}) is the statement that this minimum equals the conditional entropy, which is why a purely random component carries only a finite compression-progress budget. The calibration probe separates this proper-scoring signal from hard accuracy.

\section{Setup: audit compression progress}

Let $\Theta$ be a class of model states and let $\ell:\Theta\times\mathcal Z\to\R$ be a bounded or lower-bounded predictive loss. In the log-loss case, $\ell(\theta,(x,y))=-\log p_\theta(y\mid x)$. For experiments we use probability-floored cross-entropy,
\begin{equation}
  \ell_\varepsilon(\theta,(x,y))=-\log \max\{\varepsilon,p_\theta(y\mid x)\},
\end{equation}
which is bounded by $R=-\log\varepsilon$.

\begin{definition}[Sealed audit loss]
A sealed audit distribution $\Q$ is fixed independently of the agent's adaptive training trajectory and cannot be selected, distorted, or inspected by the agent except through the permitted audit-release mechanism. The population audit loss is
\begin{equation}
  \E(\theta)=\mathbb E_{z\sim \Q}\ell(\theta,z).
\end{equation}
For a finite audit panel $A_n=(z_1,\ldots,z_n)$, the empirical audit loss is
\begin{equation}
  \hE_n(\theta)=\frac1n\sum_{i=1}^n \ell(\theta,z_i).
\end{equation}
\end{definition}

\begin{definition}[Signed audit compression progress]
Given a trajectory $\theta_0,\theta_1,\ldots$, signed audit compression progress is
\begin{equation}
  r_t^{\CP}=\E(\theta_{t-1})-\E(\theta_t),\qquad
  \hat r_t^{\CP}=\hE_n(\theta_{t-1})-\hE_n(\theta_t).
\end{equation}
The sign is part of the definition: negative progress is charged back to the agent.
\end{definition}

\begin{definition}[False-positive budget]
For a reward signal $r_t$ and true audit loss $\E$, define the reward excess at horizon $T$ by
\begin{equation}
  \Gamma_T(r,\E)=\sum_{t=1}^T r_t-\big(\E(\theta_0)-\E(\theta_T)\big).
\end{equation}
A signal is $\Gamma$-Goodhart-resistant on a class of trajectories if $\Gamma_T(r,\E)\le \Gamma$ for every horizon $T$ and every admissible trajectory in the class. Exact audit-CP has $\Gamma=0$; finite-panel audit-CP has $\Gamma=2\Delta_n$ on the uniform-deviation event.
\end{definition}

This condition is stronger than correlation with learning: it caps the apparent reward obtainable without true audit improvement.

\section{Theorems}

\subsection{Exact sealed audits: zero false-positive budget}

\begin{theorem}[Exact-audit telescoping and finite budget]
\label{thm:telescope}
Let $\E:\Theta\to\R$ and let $\theta_t$ be any trajectory. Define $r_t=\E(\theta_{t-1})-\E(\theta_t)$. Then for every horizon $T$,
\begin{equation}
  \sum_{t=1}^T r_t=\E(\theta_0)-\E(\theta_T).
\end{equation}
If $\E(\theta_T)\ge E_{\min}$, then
\begin{equation}
  \sum_{t=1}^T r_t\le \E(\theta_0)-E_{\min}.
\end{equation}
Thus no policy can make cumulative signed audit progress diverge while audit loss stagnates or remains lower-bounded.
\end{theorem}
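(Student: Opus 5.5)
The plan is to prove the identity by induction on the horizon $T$, which is exactly telescoping. For $T=0$ the empty sum is $0=\E(\theta_0)-\E(\theta_0)$. For the inductive step, assume $\sum_{t=1}^{T} r_t=\E(\theta_0)-\E(\theta_T)$. Adding $r_{T+1}=\E(\theta_T)-\E(\theta_{T+1})$ to both sides, the $\E(\theta_T)$ terms cancel and leave $\E(\theta_0)-\E(\theta_{T+1})$. Equivalently, one can reindex $\sum_{t=1}^T\E(\theta_{t-1})-\sum_{t=1}^T\E(\theta_t)$ and observe that all interior terms cancel. No property of $\E$ or of the trajectory is used beyond $\E$ being real-valued. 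In particular, the trajectory may be chosen adaptively by any policy, since the identity holds pointwise for every sequence.

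For the bound, I would substitute the identity and apply the hypothesis $\E(\theta_T)\ge E_{\min}$, so that $-\E(\theta_T)\le -E_{\min}$. This gives $\sum_{t=1}^T r_t\le \E(\theta_0)-E_{\min}$. The right-hand side does not depend on $T$ or on the policy. Hence, if the loss is lower-bounded by $E_{\min}$ uniformly along the trajectory, the partial sums are bounded above uniformly in $T$ and cannot diverge to $+\infty$.

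For the closing informal sentence, I would add a remark about the stagnation case. If $\E(\theta_T)\ge\E(\theta_0)$, meaning no improvement, then the identity gives $\sum_{t\le T} r_t\le 0$. In the terminology of the false-positive budget definition, this means $\Gamma_T(r,\E)=0$ for every $T$ and every trajectory.

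There is no genuine obstacle here. The only care needed is bookkeeping: getting the index shift right, using the empty-sum convention at $T=0$, and stating that the lower bound is required only at the endpoint $\theta_T$, not along the whole path. In the Lean mechanization, the corresponding step would be \lean{Finset.sum_range_sub'} or a direct induction on \lean{T}.
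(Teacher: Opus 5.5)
Your proof is correct and matches the paper's: the paper telescopes the sum, and its Lean proof \lean{cumCP\_telescope} uses the same induction on $T$ via \lean{Finset.sum\_range\_succ}; the bound follows by substituting the identity and using the endpoint lower bound, as in \lean{cumCP\_le\_of\_lb}. One small wording point: $\Gamma_T(r,\E)=0$ holds for every trajectory directly from the identity, not specifically in the stagnation case where you attach it.
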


\begin{proof}
The sum telescopes:
\[
\sum_{t=1}^T \big(\E(\theta_{t-1})-\E(\theta_t)\big)=\E(\theta_0)-\E(\theta_T).
\]
The lower-bound statement follows immediately. This proof is mechanized as \lean{cumCP\_telescope}; the finite-budget form is \lean{cumCP\_le\_of\_lb}.
\end{proof}

Each hypothesis of Theorem~\ref{thm:telescope} is necessary: a fixed audit loss provides a single potential to telescope, signed accounting lets negative terms cancel, and a lower bound yields a finite budget.

\subsection{Finite audits: the \texorpdfstring{$2\Delta_n$}{2 Delta n} false-positive budget}

A reusable finite audit panel is not automatically sealed in the population sense; the relevant condition is a uniform-deviation event.

\begin{definition}[Uniform audit deviation]
For a class $\F\subseteq \Theta$, define
\begin{equation}
  \Delta_n(\F)=\sup_{\theta\in\F}|\hE_n(\theta)-\E(\theta)|.
\end{equation}
We say the panel realizes deviation $\Delta$ on $\F$ if $\Delta_n(\F)\le \Delta$.
\end{definition}

\begin{theorem}[Finite-audit Goodhart resistance]
\label{thm:finite-audit}
Assume $\Delta_n(\F)\le \Delta$ and $\theta_t\in\F$ for all $t\le T$. Then
\begin{equation}
  \sum_{t=1}^T \hat r_t^{\CP}
  \le
  \sum_{t=1}^T r_t^{\CP}+2\Delta
  = \E(\theta_0)-\E(\theta_T)+2\Delta.
\end{equation}
Equivalently, empirical audit-CP has false-positive budget at most $2\Delta$.
\end{theorem}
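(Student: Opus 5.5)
The plan is to apply Theorem~\ref{thm:telescope} twice, once to the empirical audit loss and once to the population audit loss, and then compare the two endpoint differences using the uniform-deviation hypothesis.

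First, Theorem~\ref{thm:telescope} only requires a real-valued function on $\Theta$, so I will apply it with $\hE_n$ in place of $\E$. Since $\hat r_t^{\CP}=\hE_n(\theta_{t-1})-\hE_n(\theta_t)$, this gives
\[
\sum_{t=1}^T \hat r_t^{\CP}=\hE_n(\theta_0)-\hE_n(\theta_T).
\]
Applying the same theorem to $\E$ gives $\sum_{t=1}^T r_t^{\CP}=\E(\theta_0)-\E(\theta_T)$. This identity is the equality on the right-hand side of the claimed display.

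Second, I compare the two endpoint differences. I will write
\[
\hE_n(\theta_0)-\hE_n(\theta_T)=\big(\E(\theta_0)-\E(\theta_T)\big)+\big(\hE_n(\theta_0)-\E(\theta_0)\big)-\big(\hE_n(\theta_T)-\E(\theta_T)\big).
\]
Because $\theta_0,\theta_T\in\F$, the definition of $\Delta_n(\F)$ together with the assumption $\Delta_n(\F)\le\Delta$ bounds each correction term in absolute value by $\Delta$. Their sum is therefore at most $2\Delta$, which yields the inequality. The false-positive-budget reformulation then follows by rearranging: it says $\Gamma_T(\hat r,\E)\le 2\Delta$ for every $T$.

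I do not expect a real obstacle here. The one point that needs care is that membership in $\F$ is used only at the two endpoints $t=0$ and $t=T$, so intermediate iterates impose no constraint. This is exactly why the bound is horizon-free.

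The conceptual subtlety is what "$\Delta_n(\F)\le\Delta$" means when $\F$ or the trajectory is chosen adaptively. The theorem is deterministic on the event where this bound holds, so any probabilistic content (the $\delta$ in $\Delta_n(\F,\delta)$) enters only through the probability of that event. This requires $\F$ to be fixed before the panel is drawn. The proof itself does not need to address this, and I would note it only as a remark.
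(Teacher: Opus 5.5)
Your proposal is correct and follows the paper's proof: you telescope both $\hE_n$ and $\E$, then use the uniform-deviation bound only at the endpoints $\theta_0$ and $\theta_T$, which costs $\Delta$ at each endpoint. Your remark that the probabilistic content enters only through the event $\Delta_n(\F)\le\Delta$ also matches the paper's split between the deterministic Lean theorem and the Hoeffding corollary.
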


\begin{proof}
By telescoping,
\[
\sum_t \hat r_t=\hE_n(\theta_0)-\hE_n(\theta_T),\qquad
\sum_t r_t=\E(\theta_0)-\E(\theta_T).
\]
Uniform deviation controls only the two endpoints:
\[
\hE_n(\theta_0)\le \E(\theta_0)+\Delta,
\qquad
\hE_n(\theta_T)\ge \E(\theta_T)-\Delta.
\]
Combining gives the result. This is mechanized as \lean{finite\_audit\_goodhart}.
\end{proof}

There is no union bound over $T$: after signed telescoping, the adaptive history reduces to endpoint control. The cost of adaptivity appears in proving that the panel realizes a uniform-deviation event for the reachable class; the theorem itself is horizon-free.

\begin{corollary}[Finite experts]
\label{cor:finite-experts}
If $|\F|=N$ and $\ell\in[0,R]$, then with probability at least $1-\delta$ over an i.i.d. audit panel,
\begin{equation}
  \Delta_n(\F)\le R\sqrt{\frac{\log(2N/\delta)}{2n}},
\end{equation}
so
\begin{equation}
  \sum_{t=1}^T \hat r_t^{\CP}
  \le \E(\theta_0)-\E(\theta_T)+2R\sqrt{\frac{\log(2N/\delta)}{2n}}.
\end{equation}
\end{corollary}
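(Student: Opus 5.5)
The plan is to prove the deviation bound by Hoeffding's inequality together with a union bound over the $N$ elements of $\F$. Then we substitute that bound into Theorem~\ref{thm:finite-audit}.

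\emph{Step 1: one fixed $\theta\in\F$.} The panel points $z_1,\ldots,z_n$ are i.i.d.\ from $\Q$, and the sealed-audit definition requires $\Q$, and hence the panel, to be fixed independently of the trajectory. So for fixed $\theta$ the variables $\ell(\theta,z_i)$ are i.i.d., take values in $[0,R]$, and have mean $\E(\theta)$. The two-sided Hoeffding inequality gives
\[
\Pr\big(|\hE_n(\theta)-\E(\theta)|>\varepsilon\big)\le 2\exp\!\big(-2n\varepsilon^2/R^2\big).
\]

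\emph{Step 2: union bound over $\F$.} Summing this bound over the $N$ members of $\F$ gives
\[
\Pr\big(\Delta_n(\F)>\varepsilon\big)\le 2N\exp\!\big(-2n\varepsilon^2/R^2\big).
\]
Setting the right-hand side equal to $\delta$ and solving for $\varepsilon$ yields $\varepsilon=R\sqrt{\log(2N/\delta)/(2n)}$. This is the first displayed inequality of the corollary, holding with probability at least $1-\delta$.

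\emph{Step 3: apply Theorem~\ref{thm:finite-audit}.} We work on the event from Step 2 and take $\Delta=R\sqrt{\log(2N/\delta)/(2n)}$. Any trajectory with $\theta_t\in\F$ for all $t\le T$ then meets the hypotheses of Theorem~\ref{thm:finite-audit}. That theorem gives the second display, simultaneously for every horizon $T$ and every such trajectory.

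The main subtlety is Step 3: the trajectory may be chosen adaptively, even depending on the panel through the released empirical rewards. This is harmless because the good event in Step 2 constrains only the panel, and it controls every element of $\F$ at once. Theorem~\ref{thm:finite-audit} needs only the deterministic condition $\Delta_n(\F)\le\Delta$ together with $\theta_t\in\F$, so no union over $T$ or over trajectories is needed.

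The one point to state explicitly is that the panel must be drawn independently of $\F$. The class must be fixed before sampling and must not be chosen as a function of the panel. Only under this condition is the Hoeffding step in Step 1 valid for each fixed $\theta$.
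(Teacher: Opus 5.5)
Your proposal is correct and follows the paper's proof exactly: two-sided Hoeffding for each fixed $\theta\in\F$ with tail $2\exp(-2n\varepsilon^2/R^2)$, a union bound over the $N$ models, solving $2N\exp(-2n\varepsilon^2/R^2)=\delta$ for $\varepsilon$, and then applying Theorem~\ref{thm:finite-audit} on the resulting uniform-deviation event. Your remarks also match the paper's framing. Uniformity over $\F$ absorbs the adaptivity of the trajectory, and $\F$ must be fixed before the panel is drawn; this is the paper's point that the bound is horizon-free and that adaptivity costs nothing beyond realizing the uniform-deviation event.
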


\begin{proof}
Apply two-sided Hoeffding to each fixed model and union bound over $\F$; then invoke Theorem~\ref{thm:finite-audit}. The Lean artifact mechanizes the deterministic implication from a realized uniform-deviation event to Goodhart resistance; this probabilistic instantiation is the standard finite-class concentration corollary.
\end{proof}

For infinite classes, replace $N$ by the appropriate covering number, Rademacher complexity, or PAC-Bayesian radius. In particular, bounded linear balls and bounded RKHS balls produce the same form: a finite false-positive budget whenever the effective audit capacity is finite. General neural networks enter the theory only through their effective class size or stability. If the reachable class is large enough to memorize the reusable audit panel, the bound becomes vacuous; this delimits where finite-audit resistance ceases to hold.

\subsection{Entropy floor: why noisy TV cannot pay forever}

Prediction-error bonuses pay for error itself and are therefore attracted to irreducible noise. Compression progress pays for error reduction. For log-loss, the distinction is formalized by the following entropy floor.

\begin{theorem}[Entropy floor]
\label{thm:entropy-floor}
Let $S$ be an audit component with conditional distribution $\Q_S(Y\mid X)$ and log-loss risk $\E_S(\theta)$. If $\E_S(\theta)\ge H_{\Q_S}(Y\mid X)$ for all $\theta$, then signed compression progress on $S$ satisfies
\begin{equation}
  \sum_{t=1}^T\big(\E_S(\theta_{t-1})-\E_S(\theta_t)\big)
  \le \E_S(\theta_0)-H_{\Q_S}(Y\mid X).
\end{equation}
A purely random component has only a finite improvement budget; once the model reaches the entropy floor, it cannot keep paying audit-CP.
\end{theorem}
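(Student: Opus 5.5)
The bound follows by specializing Theorem~\ref{thm:telescope} to the component risk $\E_S$. The lower bound $E_{\min}=H_{\Q_S}(Y\mid X)$ is supplied directly by the hypothesis.

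First, I apply the telescoping identity of Theorem~\ref{thm:telescope} with $\E$ replaced by $\E_S$:
\[
\sum_{t=1}^T\big(\E_S(\theta_{t-1})-\E_S(\theta_t)\big)=\E_S(\theta_0)-\E_S(\theta_T).
\]
Second, the hypothesis $\E_S(\theta)\ge H_{\Q_S}(Y\mid X)$ holds for every $\theta$, so in particular it holds for $\theta_T$. Plugging this into the finite-budget clause of Theorem~\ref{thm:telescope} gives the stated inequality. This step is exactly the Lean lemma \lean{cumCP\_le\_of\_lb} instantiated at $E_{\min}=H_{\Q_S}(Y\mid X)$.

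The only substantive content is justifying the hypothesis, which is Gibbs' inequality. The theorem assumes it, but I would discharge it to show it is not vacuous for log-loss. Fix $x$ and write $q=\Q_S(\cdot\mid x)$ and $p=p_\theta(\cdot\mid x)$. Then
\[
\mathbb E_{y\sim q}[-\log p(y)]=H(q)+\mathrm{KL}(q\,\|\,p)\ge H(q).
\]
Averaging over $x$ gives $\E_S(\theta)\ge H_{\Q_S}(Y\mid X)$. I expect this to be the main obstacle, because it needs nonnegativity of KL. In the finite setting I would prove that nonnegativity via $\log u\le u-1$ applied to $u=p(y)/q(y)$ on the support of $q$, which yields $-\mathrm{KL}\le\sum_y p(y)-1\le 0$. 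This is the finite Gibbs nonnegativity result the paper says it mechanizes.

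The "purely random component" remark needs one more step. If $Y$ is independent of $X$ and of everything the model can condition on, the floor equals $H(Y)$ and is attained by the true marginal. Total payable progress is then at most $\E_S(\theta_0)-H(Y)<\infty$. Once $\E_S(\theta_t)$ reaches the floor, every further signed increment is $\le 0$.

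If instead the experiments' floored loss $\ell_\varepsilon$ is used, the Gibbs step above no longer applies verbatim, because $\ell_\varepsilon$ can undercut log-loss when $p$ is small. In that case I would state the bound only under the assumed hypothesis, which is exactly how the theorem is phrased.
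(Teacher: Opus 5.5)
Your proposal is correct and follows the paper's proof: you telescope via Theorem~\ref{thm:telescope} and use the floor at $\theta_T$ as $E_{\min}$ (\lean{cumCP\_le\_of\_lb}), and you discharge the floor by cross-entropy $=H+\mathrm{KL}$ with finite Gibbs averaged over $x$, exactly as the paper's mechanized conditional version does (that version also assumes $p_\theta$ has full support, which your $\log(p/q)$ step and the finiteness of $\E_S$ both need). One small correction to your closing remark: after the floor is reached, it is the cumulative signed progress from that time onward that is $\le 0$, not every later one-step increment, since a forget-then-relearn pair has a positive second step.
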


\begin{proof}
For log-loss, $\E_S(\theta)=H_{\Q_S}(Y\mid X)+\mathrm{KL}(\Q_S\Vert p_\theta)$ and $\mathrm{KL}\ge 0$, hence $\E_S(\theta)\ge H_{\Q_S}(Y\mid X)$, and the budget follows from Theorem~\ref{thm:telescope}. This composition is mechanized end to end in the conditional, input-averaged form: the cross-entropy decomposition and the finite Gibbs inequality discharge the conditional entropy floor, which is then fed into the telescoping budget. The corresponding Lean declarations are listed in Table~\ref{tab:lean}; the single-input case specializes the same chain, and an abstract version takes the floor as a hypothesis for an arbitrary lower-bounded potential.
\end{proof}

\section{Where the theorem breaks}

Each assumption above has a failure construction and a corresponding experiment.

\paragraph{Clipping destroys accounting.}
If reward is $r_t^+=\max\{0,\E(\theta_{t-1})-\E(\theta_t)\}$, then a two-state cycle $a,b,a,b,\ldots$ with $\E(a)<\E(b)$ accumulates positive reward on every $b\to a$ transition while returning to the same endpoint every two steps. Thus $\sum_t r_t^+$ can grow linearly at zero net improvement. Signed progress cancels exactly.

\paragraph{Stream scoring destroys the fixed potential.}
If the agent is rewarded by $\E_{\Pstream_t}(\theta_{t-1})-\E_{\Pstream_t}(\theta_t)$ on its own selected stream $\Pstream_t$, then there is no fixed $\E$ to telescope. A policy can select or distort streams where local loss decreases while sealed-audit loss does not.

\paragraph{High-capacity reusable panels destroy uniform deviation.}
If $\F$ can interpolate the finite audit panel, a policy can drive $\hE_n$ down while $\E$ stagnates or rises. This is ordinary adaptive holdout overfitting \cite{dwork2015holdout,blum2015ladder}, now expressed as false audit-CP. The $2\Delta_n$ theorem remains correct, but $\Delta_n$ is no longer small.

\begin{figure}[H]
  \centering
  \includegraphics[width=0.98\linewidth]{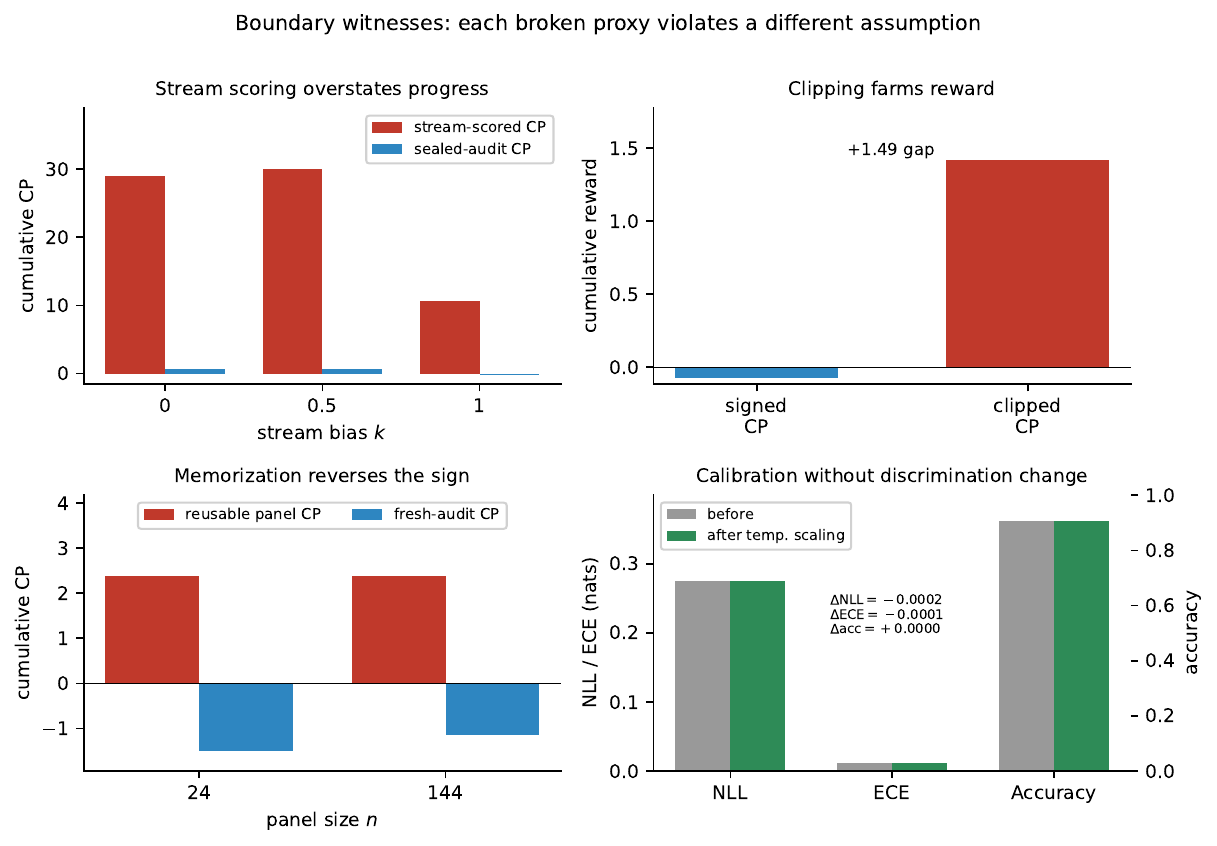}
  \caption{Boundary tests from the experiment artifact. Stream scoring: stream-scored progress can exceed sealed-audit progress by roughly $40\times$. Clipping: clipped reward farms a positive cumulative signal while signed reward equals endpoint change. Panel memorization: direct reusable-panel training yields positive apparent progress and negative true progress. Calibration: temperature scaling changes log-loss/ECE at fixed hard accuracy, emphasizing that compression progress measures probabilistic prediction.}
  \label{fig:boundary}
\end{figure}

\section{Algorithmic separation: reward signal versus scheduler}

We distinguish the reward signal from the scheduler.

\paragraph{Audit-CP is the reward/accounting signal.}
For each candidate training source $j$, temporarily train or estimate the update effect, evaluate the sealed-audit loss before and after, and release the signed reward
\begin{equation}
  r_{t,j}=\hE_n(\theta_t)-\hE_n(U_j(\theta_t)),
\end{equation}
where $U_j$ denotes the update produced by source $j$. This reward is meaningful because Theorem~\ref{thm:finite-audit} controls its cumulative false positives.

\paragraph{EXP3/MWU is the scheduler.}
We use multiplicative weights / EXP3 to allocate training among sources. With weights $w_{j,t}$ and exploration parameter $\gamma$,
\begin{equation}
 p_{j,t}=(1-\gamma)\frac{w_{j,t}}{\sum_k w_{k,t}}+\frac{\gamma}{K},\qquad
 w_{j,t+1}=w_{j,t}\exp\left(\eta\frac{r_{t,j}\mathbf 1\{j=j_t\}}{p_{j,t}}\right).
\end{equation}
This is the standard adversarial-bandit use of multiplicative weights \cite{auer2002nonstochastic}, providing exploration and exploitation. The credibility of the cumulative reward comes from the reward signal: if the payoff fed to MWU is a hackable stream reward, MWU efficiently optimizes the hack; if the payoff is signed audit-CP, Theorem~\ref{thm:finite-audit} supplies the credibility certificate.

\begin{center}
\fbox{\begin{minipage}{0.92\linewidth}
\textbf{Audit-CP scheduler protocol.}
For $t=1,\ldots,T$: choose a source $j_t$ with an EXP3/MWU policy; train the model on fresh samples from source $j_t$; compute signed audit loss change on the sealed panel; update the scheduler using the signed audit-CP reward. Negative rewards are retained. The audit panel is never used as training data and the released signal is protected by the chosen audit-release mechanism.
\end{minipage}}
\end{center}

\section{Experiments}

\subsection{Experimental substrate}

All reported numbers come from the accompanying experiment artifact. The curriculum substrate is ARC-TGI, a generator framework for ARC-style grid transformation tasks; its released generator inventory includes ARC-Mini, ARC-AGI-1, and ARC-AGI-2 families \cite{lehmann2026arctgi}. Curriculum experiments require repeated fresh samples from a stable latent rule; a single static puzzle instance is insufficient. The broader ARC-AGI-2 benchmark is motivated by few-shot abstraction and refinement-loop behavior \cite{chollet2026arcprize}.

The experiment suite uses $30\times 30$ padded grids, 24 learnable ARC-TGI grid-to-grid families, 8 i.i.d. uniform distractor families, probability-floored cross-entropy with $\varepsilon=0.01$ and cap $R\approx 4.6$, audit panels of size 512 per family unless otherwise stated, 5000 training steps, and 20 seeds for every experiment except the adaptive attack. The adaptive scalar-feedback attack reports its single-cell power calibration and panel-size scaling curve over 20 seeds each; the full-scale grid attack, in which panel capacity far exceeds the query budget, uses 12 seeds. Baselines include prediction-error curiosity, real RND with a fixed random target network and learned predictor \cite{burda2018rnd}, real ICM with inverse-model features and forward prediction error \cite{pathak2017icm}, uniform sampling, round-robin, and an oracle that samples only learnable families.

\subsection{Main result table}

\begin{table}[H]
\centering
\small
\begin{tabular}{p{0.20\linewidth}p{0.38\linewidth}p{0.34\linewidth}}
\toprule
Experiment & Question & Result \\
\midrule
Finite-audit concentration & Does finite-audit noise shrink at the theory rate? & $\Delta_n\propto n^{-0.527}$ over 20 seeds, matching the $n^{-1/2}$ finite-class rate. \\
Reward-signal ablation & Does audit-CP resist noisy-TV curriculum Goodharting? & Audit-CP is the best non-oracle reward signal: active-cell accuracy $0.387\pm0.006$ vs prediction-error $0.338\pm0.026$, RND $0.371\pm0.004$, ICM $0.347\pm0.009$, uniform $0.376\pm0.005$, oracle $0.391\pm0.005$. \\
Holdout-reuse attack & Can scalar feedback overfit a reusable finite audit? & Yes for naive release: gap $3.24\pm0.07>2\Delta_n=1.64$, with the attacker winning in $20/20$ seeds. Fresh subsampling, laddering, rounding, and one-shot release stay below threshold. \\
Signed vs.\ clipped & Is signed accounting necessary? & Clipped CP exceeds signed endpoint progress by $1.490\pm0.072$ in 20/20 seeds. \\
Stream vs.\ audit & Is stream CP a valid proxy for audit CP? & No. Stream CP overstates sealed-audit CP by approximately $40\times$ at $k=0$ and $k=0.5$. \\
Panel memorization & Can reusable panels be memorized directly? & Yes. Apparent CP $+2.39$ while fresh true CP is $-1.50$ at $n=24$; false-positive excess $3.88\pm0.20$. \\
Calibration probe & Does log-loss CP reduce to hard accuracy? & No. Temperature rescaling changes NLL/ECE at the noise floor while $\Delta$accuracy is exactly $0.00000$ in all 20 seeds. \\
\bottomrule
\end{tabular}
\caption{Experiment suite. Each row tests one theorem assumption: concentration, noisy-TV entropy, adaptive holdout release, signed accounting, stream/audit separation, capacity, and metric identity. Reported uncertainties are one standard deviation across the seeds (20 unless otherwise noted).}
\label{tab:main-results}
\end{table}

\subsection{Finite-audit concentration}

The finite-audit theorem is useful only if $\Delta_n$ is small for the reachable class. We measure empirical uniform deviation over finite model families and vary panel size.

\begin{figure}[H]
  \centering
  \includegraphics[width=0.58\linewidth]{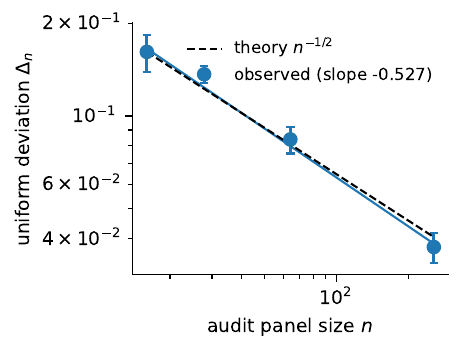}
  \caption{Finite-audit concentration: empirical uniform deviation of floored cross-entropy over a finite family decays as $\Delta_n\propto n^{-0.527}$, close to the $n^{-1/2}$ prediction.}
  \label{fig:e5}
\end{figure}

The observed slope is $-0.527$ over 20 seeds, matching the predicted $n^{-1/2}$ rate; the Hoeffding constant itself is loose. The false-positive budget is therefore an empirically measurable quantity.

\subsection{Reward-signal ablation under noise distractors}

The ablation holds the scheduler fixed and varies only the reward signal. The win condition is downstream active-cell reconstruction accuracy on a sealed audit panel, plus low allocation to i.i.d. noise distractors.

\begin{figure}[H]
  \centering
  \includegraphics[width=0.96\linewidth]{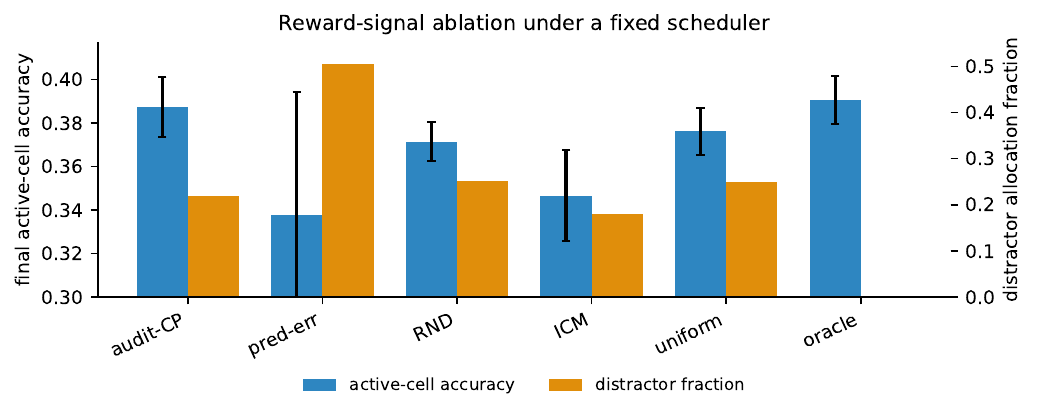}
  \caption{Reward-signal ablation with the same scheduler. Final active-cell reconstruction accuracy (left axis) and distractor allocation fraction (right axis) per reward signal, 20 seeds. Audit-CP is the strongest non-oracle reward signal and spends less than half the distractor budget of prediction-error curiosity.}
  \label{fig:c1}
\end{figure}

\begin{table}[H]
\centering
\small
\begin{tabular}{lccc}
\toprule
Reward signal & Active-cell accuracy & Floored CE & Distractor fraction \\
\midrule
Audit-CP & $\mathbf{0.387\pm0.006}$ & $1.748\pm0.013$ & $0.219\pm0.009$ \\
Prediction error & $0.338\pm0.026$ & $1.845\pm0.053$ & $0.504\pm0.021$ \\
RND & $0.371\pm0.004$ & $1.751\pm0.012$ & $0.251\pm0.012$ \\
ICM & $0.347\pm0.009$ & $1.808\pm0.027$ & $0.179\pm0.024$ \\
Uniform & $0.376\pm0.005$ & $1.773\pm0.013$ & $0.250\pm0.003$ \\
Round-robin & $0.371\pm0.006$ & $1.778\pm0.012$ & $0.250\pm0.000$ \\
Oracle learnable & $0.391\pm0.005$ & $\mathbf{1.728\pm0.012}$ & $0.000\pm0.000$ \\
\bottomrule
\end{tabular}
\caption{Reward-signal ablation, final metrics, 20 seeds. Audit-CP beats prediction-error by $+0.0498$ and ICM by $+0.0407$ in 20/20 seeds, edges RND by $+0.0159$ in 17/20 seeds, and approaches the learnable-only oracle.}
\label{tab:c1}
\end{table}

Prediction error is Goodharted by noise: it spends 50.4\% of its sampling budget on distractors. Audit-CP spends 21.9\%, remains above uniform and round-robin, and approaches the oracle. Audit-CP allocates toward tasks whose error compresses, independent of task difficulty.

\begin{figure}[H]
  \centering
  \includegraphics[width=0.9\linewidth]{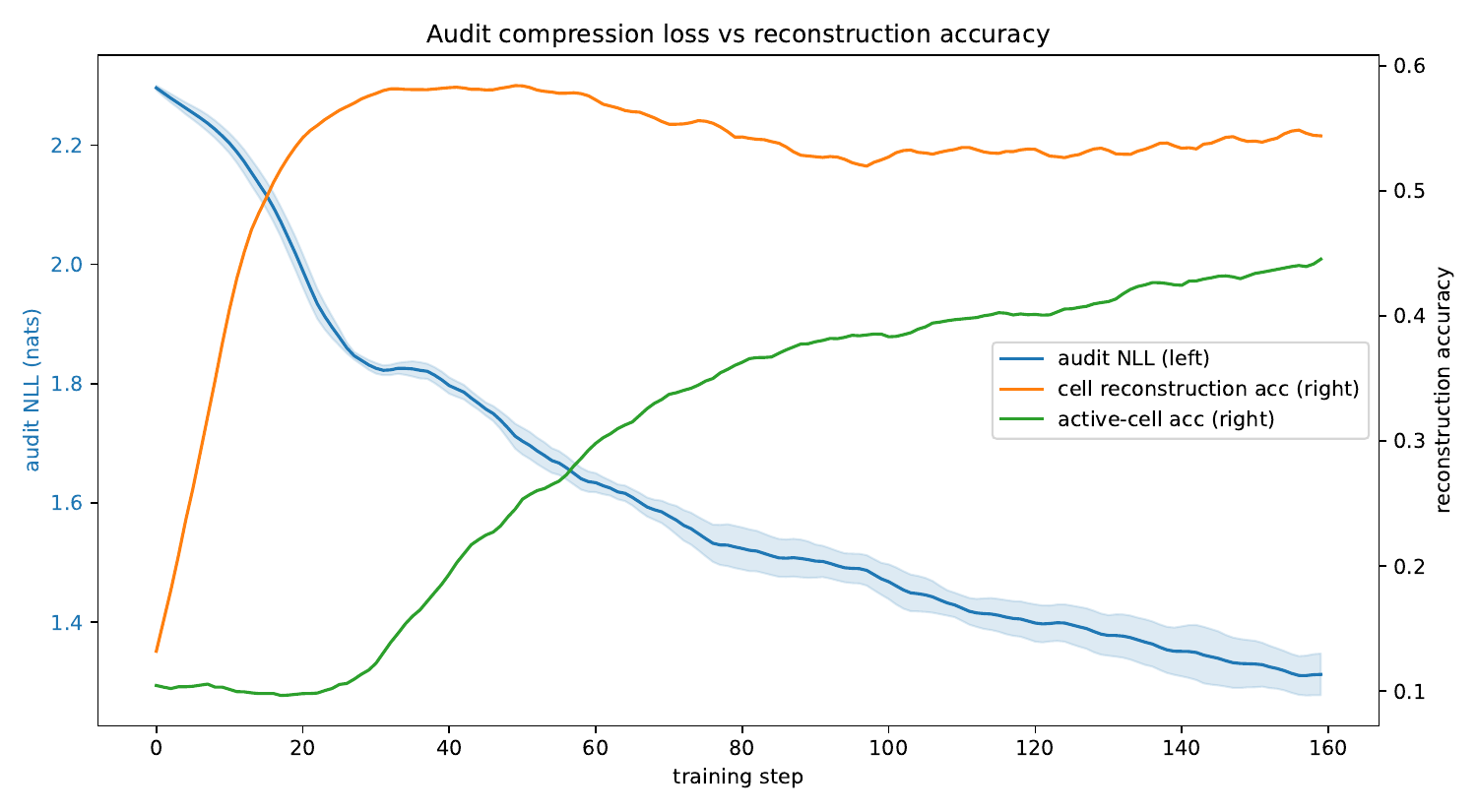}
  \caption{Sealed-panel learning curves for an audit-CP run. Audit NLL (left axis) falls, equivalently cumulative audit compression progress rises, while cell and active-cell reconstruction accuracy (right axis) improve over training. The signed audit-CP signal tracks downstream reconstruction. One-step log-loss changes differ from one-step hard-accuracy changes; the reward-signal ablation reports both metrics.}
  \label{fig:exp1}
\end{figure}

\subsection{Adaptive scalar-feedback holdout overfitting}

In the black-box scalar-feedback setting, the attacker sees only released audit scores and proposes adaptive model updates. This is the holdout-reuse problem applied to audit-CP.

\begin{figure}[H]
  \centering
  \includegraphics[width=0.96\linewidth]{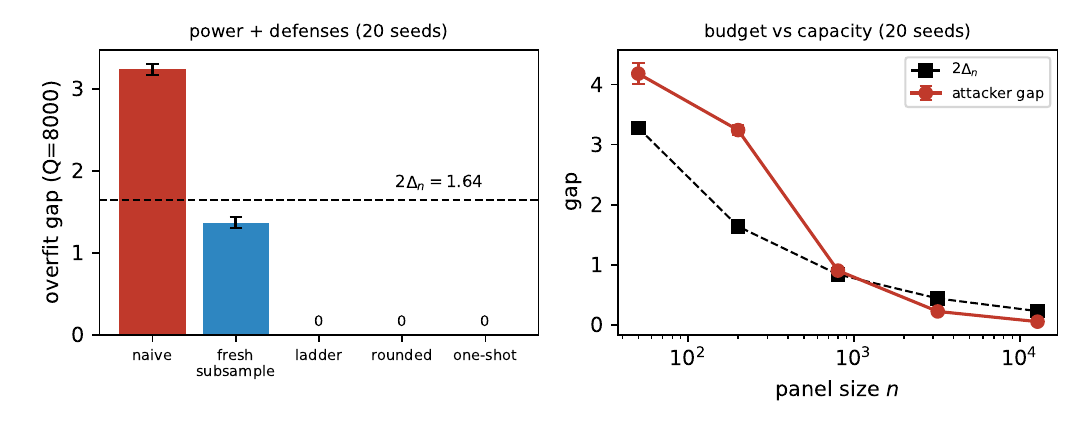}
  \caption{Adaptive scalar-feedback overfitting (single-cell calibration, 20 seeds; error bars are one standard deviation). Left: the attacker has power against naive reusable release, with gap $3.24\pm0.07>2\Delta_n=1.64$ and a win in all 20 seeds. Fresh subsampling, laddering, rounding, and one-shot release keep the gap below threshold. Right: the gap falls below threshold once panel capacity exceeds the query budget, crossing in $20/20$ seeds at $n\le 800$ and in $0/20$ at $n\ge 3200$.}
  \label{fig:e7}
\end{figure}

\begin{table}[H]
\centering
\small
\begin{tabular}{lccc}
\toprule
Release mechanism & Overfit gap (mean $\pm$ 1 SD) & $2\Delta_n$ & Attacker wins \\
\midrule
Naive reusable & $3.24\pm0.07$ & $1.64$ & $20/20$ \\
Fresh subsample & $1.37\pm0.07$ & $1.64$ & $0/20$ \\
Ladder & $0.00$ & $1.64$ & $0/20$ \\
Rounded & $0.00$ & $1.64$ & $0/20$ \\
One-shot & $0.00$ & $1.64$ & $0/20$ \\
\bottomrule
\end{tabular}
\caption{Holdout-attack defense calibration at $Q=8000$, $n=200$, single-cell attack, 20 seeds. The overfit gap is reusable minus fresh audit-CP; the attacker wins when the gap exceeds the finite-audit threshold $2\Delta_n$. Naive reuse is exploitable in all 20 seeds; the standard release defenses suppress the scalar-feedback attack below threshold.}
\label{tab:e7}
\end{table}

This attack maps the boundary of finite audit-CP: reusing a panel and releasing too much information lets an adaptive agent overfit it, whereas defenses that reduce information leakage keep the scalar-feedback gap below the validity threshold. The $2\Delta_n$ threshold is therefore operational.

\subsection{Boundary and metric tests}

Figure~\ref{fig:boundary} collects the remaining assumption tests. Signed accounting is load-bearing: signed cumulative progress equals endpoint improvement $-0.072\pm0.042$, while clipped reward accumulates $1.417\pm0.088$. The stream/audit separation holds: stream-scored CP is $29.0$ to $30.0$ while sealed-audit CP is $0.72$ to $0.74$ for $k\in\{0,0.5\}$, and at $k=1$ the sealed audit goes slightly negative. The capacity boundary is real: direct training on a reusable panel gives apparent CP around $+2.39$ while true fresh-audit CP is negative. Calibration and discrimination decouple: log-loss and ECE can change with no change in hard accuracy, so compression progress should be read as probabilistic predictive improvement, not merely 0-1 discrimination.

\section{Lean 4 mechanization}

The formal artifact is a self-contained Lean 4 / Mathlib development. The core declarations are listed in Table~\ref{tab:lean}. The two structural theorems are stated for an arbitrary potential $\E:\iota\to\R$; cross-entropy and finite PMFs instantiate the entropy-floor theorem.

\begin{figure}[H]
  \centering
  \includegraphics[width=0.92\linewidth]{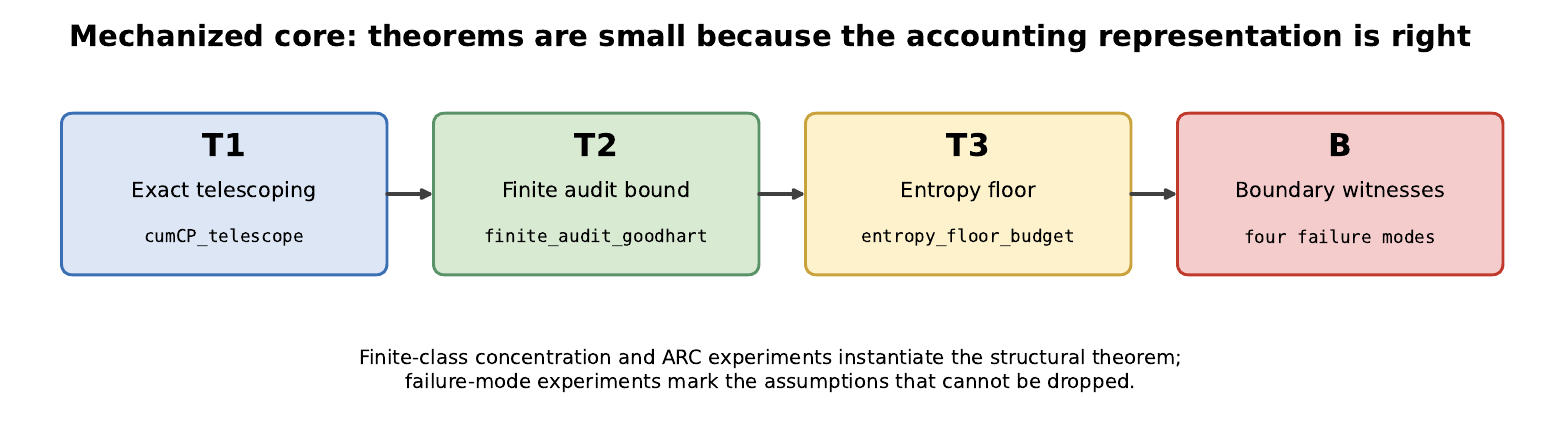}
  \caption{Proof architecture. Signed audit progress is an endpoint potential drop, which keeps the mechanized theorems short. The finite-class and ARC experiments instantiate the structural result; the boundary experiments mark assumptions that cannot be dropped.}
  \label{fig:lean-ladder}
\end{figure}

\begin{table}[H]
\centering
\small
\begin{tabular}{p{0.22\linewidth}p{0.32\linewidth}p{0.38\linewidth}}
\toprule
Paper result & Lean declaration & Meaning \\
\midrule
T1 exact telescoping & \lean{cumCP\_telescope} & $\sum_t(E(g_t)-E(g_{t+1}))=E(g_0)-E(g_T)$ \\
T1 finite budget & \lean{cumCP\_le\_of\_lb} & If the terminal loss is lower-bounded, cumulative signed CP is bounded. \\
T2 finite-audit bound & \lean{finite\_audit\_goodhart} & On a uniform-deviation event, empirical audit-CP exceeds true audit improvement by at most $2\Delta$. \\
Finite Gibbs & \lean{klDivFinitePMF\_nonneg} & KL between finite PMFs is nonnegative under positive prior support. \\
Cross-entropy floor & \lean{entropy\_le\_crossEntropyFinitePMF} & $H(Q)\le H(Q,P)$, derived from Gibbs via \lean{crossEntropyFinitePMF\_eq\_entropy\_add\_kl}. \\
T3 entropy-floor budget & \lean{cond\_entropy\_floor\_budget\_crossEntropy} & Conditional (input-averaged) cross-entropy CP budget $H(Y\mid X)\le\E_S$, with the floor discharged by conditional Gibbs (\lean{condEntropy\_le\_condCrossEntropy}); \lean{entropy\_floor\_budget\_crossEntropy} is the single-input case. \\
Finite-expert radius & \lean{deltaFiniteExperts} & Definition of $L\sqrt{\log(2N/\delta)/(2n)}$ used by the finite-class corollary. \\
\bottomrule
\end{tabular}
\caption{Lean proof integration. The deterministic and information-theoretic core is mechanized. The finite-class concentration corollary uses the standard Hoeffding-plus-union-bound argument to realize the uniform-deviation premise of \lean{finite\_audit\_goodhart}.}
\label{tab:lean}
\end{table}

The finite-audit bound is mechanized as follows:

\begin{lstlisting}[style=leanstyle]
theorem finite_audit_goodhart {iota : Type*} {F : Set iota}
    {Ehat E : iota -> Real} {Delta : Real}
    (hUD : UniformDev F Ehat E Delta) {g : Nat -> iota}
    (hg : Admissible g F) (T : Nat) :
    cumCP Ehat g T <= cumCP E g T + 2 * Delta := by
  rw [cumCP_telescope Ehat g T, cumCP_telescope E g T]
  have h0 := abs_le.1 (hUD (g 0) (hg 0))
  have hT := abs_le.1 (hUD (g T) (hg T))
  linarith [h0.1, h0.2, hT.1, hT.2]
\end{lstlisting}

Once signed empirical CP telescopes, only the endpoints matter: the uniform-deviation event is used exactly twice, at $g(0)$ and $g(T)$. The bound is therefore horizon-free.

\section{Discussion}

\paragraph{Relation to compression-progress folklore.}
The informal compression-progress principle rewards improvements in the compressor. Our theorem states the conditions under which this is credible: the compressor must be evaluated by a fixed audit loss, the reward must be signed, and finite reuse must be controlled by a uniform-deviation budget.

\paragraph{Multiplicative weights versus the audit certificate.}
Multiplicative weights has regret and allocation guarantees, but its potential is not a predictive-performance metric: it can reweight tasks without reducing audit loss. The credibility certificate is therefore audit-CP, and MWU/EXP3 is a scheduler that consumes it.

\paragraph{Log-loss versus hard accuracy.}
Compression is probabilistic. Proper scoring losses reward calibrated beliefs, not only argmax decisions. Calibration can improve at fixed hard accuracy. For reconstruction tasks we report active-cell accuracy for interpretability; the certified signal is audit cross-entropy.

\section{Limitations}

The guarantee is conditional: it assumes a sealed audit, a reachable model class of finite effective capacity, and an audit the agent cannot alter, the boundaries probed by the stream-scoring, panel-memorization, and holdout-reuse experiments. The deterministic and information-theoretic core is mechanized in Lean; the finite-class concentration corollary is the standard Hoeffding-plus-union argument, stated but not mechanized. The empirical claims are scoped to match: audit-CP leads the non-oracle baselines only narrowly in mean active-cell accuracy, so its robust separation is the distractor-allocation budget; the unchanged accuracy in the calibration probe is an identity of temperature rescaling, with the calibration signal carried by NLL and ECE; and the scalar-feedback attack crosses threshold in the single-cell calibration while the deployed-scale panel stays below it.

\section{Conclusion}

Compression progress becomes a Goodhart-resistant RSI signal only after a precise measurement choice: score \emph{signed} progress on a \emph{sealed audit} distribution. The resulting reward has an endpoint accounting identity and, for finite panels, a $2\Delta_n$ false-positive budget. The same budget accounts for the failure modes: prediction-error curiosity is attracted to irreducible noise; clipped progress farms forgetting/relearning cycles; stream progress optimizes the wrong distribution; high-capacity reusable audits can be overfit. Audit-CP avoids these failures exactly when the audit remains sealed and its uniform-deviation budget is finite.

The resulting design rule for intrinsic-motivation and self-improvement systems is: use compression progress as the audited reward, use a separate scheduler for exploration, retain negative progress, and treat the audit as a protected measurement instrument. Under these conditions, apparent improvement is bounded by true audit improvement plus the panel's deviation budget.

\section*{Ethics statement}

This work studies an accounting signal for genuine capability improvement, motivated by reward design for continual learning and recursive self-improvement. The contribution is a measurement condition, not a safety guarantee. The telescoping identity and the $2\Delta_n$ budget certify that signed compression progress on a sealed audit cannot overstate true audit improvement by more than a finite amount. They do not certify that the audit distribution captures everything a deployer cares about, nor that an agent which satisfies the budget is safe. We state the boundary conditions explicitly, namely clipping, stream scoring, high-capacity reusable panels, and adaptive feedback, so that the signal is not read as a stronger guarantee than it is.

The adaptive scalar-feedback attack reproduces a known adaptive-data-analysis vulnerability: repeated scalar feedback from a reusable panel can be overfit. We include it to calibrate the test and to show that standard release defenses suppress it, not to introduce a new exploit. All experiments use synthetic ARC-TGI grid-transformation tasks and contain no human-subject or personal data. The principal societal risk is overclaiming: deploying audit-CP without a sealed audit and a finite uniform-deviation budget could create false assurance of genuine improvement. The boundary results are included to make that failure mode visible.

\section*{Reproducibility statement}

The numerical claims in this paper are computed from the accompanying experiment artifact containing scripts, logs, per-seed JSON shards, summary tables, and vector figures. The formal claims in Table~\ref{tab:lean} are represented by the accompanying Lean 4 artifact with a pinned Mathlib dependency and an axiom-audit target. The public source package for this paper uses only these two artifacts as inputs for results and proof declarations.

\clearpage
\appendix

\section{Additional theorem details}

\subsection{Finite-class concentration}

Let $X_i^\theta=\ell(\theta,z_i)$ for $z_i\sim \Q$, with $X_i^\theta\in[0,R]$. For a fixed $\theta$, Hoeffding gives
\begin{equation}
  \Pr\left(|\hE_n(\theta)-\E(\theta)|>\epsilon\right)
  \le 2\exp\left(-\frac{2n\epsilon^2}{R^2}\right).
\end{equation}
Union bounding over $N$ experts gives
\begin{equation}
  \Pr\left(\sup_{\theta\in\F}|\hE_n(\theta)-\E(\theta)|>\epsilon\right)
  \le 2N\exp\left(-\frac{2n\epsilon^2}{R^2}\right).
\end{equation}
Setting the right-hand side to $\delta$ yields Corollary~\ref{cor:finite-experts}. Combining with Theorem~\ref{thm:finite-audit} gives the finite-expert false-positive budget.

\subsection{Covering-number extension}

For a bounded Lipschitz loss family $\ell_\theta$ and an $\epsilon$-cover of size $N(\epsilon,\F,d)$ in a metric that controls loss uniformly, the same argument yields
\begin{equation}
  \Delta_n(\F)\lesssim \epsilon + R\sqrt{\frac{\log N(\epsilon,\F,d)+\log(1/\delta)}{n}}.
\end{equation}
For bounded linear classes in $d$ dimensions, $\log N(\epsilon)$ scales as $d\log(B/\epsilon)$ under standard norm constraints. For bounded RKHS balls, the analogous statement is controlled by the corresponding metric entropy or effective dimension. The paper's claims do not require a novel concentration theorem; they require recognizing that any such theorem supplies the exact same false-positive budget in Theorem~\ref{thm:finite-audit}.

\subsection{Boundary constructions}

\begin{proposition}[Clipped-cycle construction]
Let $a,b\in\Theta$ with $\E(a)<\E(b)$, and define $\theta_t=a$ for even $t$ and $\theta_t=b$ for odd $t$. Then signed cumulative progress over every even horizon is zero, but clipped cumulative progress grows linearly in $T$.
\end{proposition}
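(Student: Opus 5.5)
The plan is to compute both cumulative sums directly from the alternating trajectory, using Theorem~\ref{thm:telescope} for the signed part and a per-step case analysis for the clipped part. Set $\Delta:=\E(b)-\E(a)>0$.

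\textbf{Signed sum.} By Theorem~\ref{thm:telescope}, $\sum_{t=1}^T r_t=\E(\theta_0)-\E(\theta_T)$. For every even $T$ we have $\theta_T=a=\theta_0$, so the signed sum is exactly zero. No further work is needed here.

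\textbf{Clipped sum: per-step rewards.} I will evaluate $r_t^+=\max\{0,\E(\theta_{t-1})-\E(\theta_t)\}$ by the parity of $t$.
\begin{itemize}
\item For odd $t$, the transition is $\theta_{t-1}=a\to\theta_t=b$. The signed increment is $\E(a)-\E(b)=-\Delta<0$, so $r_t^+=0$.
\item For even $t\ge 2$, the transition is $b\to a$. The increment is $\Delta>0$, so $r_t^+=\Delta$.
\end{itemize}

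\textbf{Clipped sum: total.} Over horizon $T$ there are exactly $\lfloor T/2\rfloor$ even indices in $\{1,\dots,T\}$. Hence
\[
\sum_{t=1}^T r_t^+=\lfloor T/2\rfloor\,\Delta,
\]
which is $\tfrac{T}{2}\Delta$ for even $T$. This grows linearly in $T$, and the slope $\Delta/2$ is strictly positive. Since the true endpoint improvement is $0$ at even horizons, the reward excess $\Gamma_T(r^+,\E)=\tfrac{T}{2}\Delta$ is unbounded. So clipped progress is not $\Gamma$-Goodhart-resistant for any finite $\Gamma$.

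\textbf{Main obstacle.} There is no real obstacle. The only point needing care is the index bookkeeping: with the convention $\theta_0=a$, the rewarded $b\to a$ transitions land on even indices. Getting this right ensures the count is $\lfloor T/2\rfloor$ and not off by one, and that strictness $\E(a)<\E(b)$ is exactly what makes the slope nonzero.
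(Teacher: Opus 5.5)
Your proof is correct and follows essentially the paper's argument: the clipped sum comes from the same parity case analysis, giving $0$ on each $a\to b$ step and $\E(b)-\E(a)>0$ on each $b\to a$ step. The only cosmetic difference is that you get the zero signed sum from Theorem~\ref{thm:telescope} using $\theta_T=\theta_0=a$, whereas the paper cancels each two-step cycle directly; these are the same computation.
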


\begin{proof}
Each two-step cycle has signed reward $(\E(a)-\E(b))+(\E(b)-\E(a))=0$. The clipped reward for the $a\to b$ step is $0$ and for the $b\to a$ step is $\E(b)-\E(a)>0$. Thus every two-step cycle pays positive clipped reward at zero net endpoint improvement.
\end{proof}

\begin{proposition}[Stream-audit separation]
There exist losses $\E_{\Pstream_t}$ and $\E_\Q$ and a trajectory $\theta_t$ such that $\E_{\Pstream_t}(\theta_{t-1})-\E_{\Pstream_t}(\theta_t)>0$ for all $t$ while $\E_\Q(\theta_T)\ge \E_\Q(\theta_0)$.
\end{proposition}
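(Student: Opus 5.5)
This is an existence statement, so I will prove it by an explicit construction. The construction makes the stream loss at step $t$ depend on a distribution $\Pstream_t$ that the agent selects afresh, while the audit loss $\E_\Q$ stays fixed. The key point is that stream progress does not telescope: each term $\E_{\Pstream_t}(\theta_{t-1})-\E_{\Pstream_t}(\theta_t)$ is measured with a different potential, so positivity of every term says nothing about $\E_\Q(\theta_T)-\E_\Q(\theta_0)$.

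The simplest concrete choice reuses the clipped-cycle idea. Take a two-state trajectory with $a,b\in\Theta$, $\theta_t=a$ for even $t$ and $\theta_t=b$ for odd $t$, and a fixed audit loss with $\E_\Q(a)=\E_\Q(b)$. Then at every even horizon $\E_\Q(\theta_T)=\E_\Q(\theta_0)$, and the inequality $\E_\Q(\theta_T)\ge\E_\Q(\theta_0)$ holds with equality.

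Next I define the streams so that each step looks like progress. For $t$ odd (transition $a\to b$), choose $\Pstream_t$ so that $\E_{\Pstream_t}(a)=1$ and $\E_{\Pstream_t}(b)=0$. For $t$ even (transition $b\to a$), choose $\Pstream_t$ so that $\E_{\Pstream_t}(b)=1$ and $\E_{\Pstream_t}(a)=0$. Every increment is then $+1>0$, and cumulative stream progress equals $T$, growing linearly while audit loss is unchanged.

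To make this a genuine prediction instance rather than an arbitrary assignment of numbers, I will realize the losses with log-loss on a single binary outcome:
\begin{itemize}
\item $p_a$ predicts $Y=0$ with high probability and $p_b$ predicts $Y=1$ with high probability.
\item $\Pstream_t$ is a point mass on whichever outcome the next model predicts.
\item $\Q$ is uniform on $\{0,1\}$.
\end{itemize}
If $p_a$ and $p_b$ are mirror images of each other, the symmetry gives $\E_\Q(a)=\E_\Q(b)$. The floored loss $\ell_\varepsilon$ keeps every value bounded.

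If strict audit degradation is wanted, I can perturb $b$ slightly so that $\E_\Q(b)>\E_\Q(a)$ and stop at an odd horizon; the stream increments remain positive for a small enough perturbation.

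The main obstacle is interpretive rather than technical. The statement quantifies over the existence of losses and a trajectory, so I need to confirm that letting $\Pstream_t$ vary with $t$ is allowed; the paragraph ``Stream scoring destroys the fixed potential'' indicates that it is exactly the intended mechanism. Once the construction is written down, checking positivity of each increment and the audit equality is immediate.
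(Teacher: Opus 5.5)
Your construction is correct, but it separates stream and audit progress by a different mechanism from the paper's.

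The paper's example is a two-coordinate linear prediction problem. The audit loss depends only on coordinate~1 and the stream loss only on coordinate~2, and the trajectory is monotone: it improves coordinate~2 while degrading coordinate~1. The stream loss need not change with $t$ at all, so the failure there is pure distribution mismatch, and the audit loss can be made to worsen steadily. If that stream loss is held fixed and lower-bounded, however, Theorem~\ref{thm:telescope} applied to it keeps the cumulative stream reward bounded.

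You instead let the agent re-select $\Pstream_t$ every step, as a point mass on the outcome the next model favours, and cycle $a,b,a,b,\ldots$ with $\E_\Q(a)=\E_\Q(b)$. This isolates the other failure named in the text, the absence of a fixed potential. Cumulative stream reward grows linearly in $T$ even though the model keeps returning to its initial state. That mirrors the clipped-cycle construction and contrasts more sharply with the finite budget of Theorem~\ref{thm:telescope}.

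The price is that your example genuinely needs the time-varying stream. With one fixed $\Pstream$, the increments around a cycle would sum to zero and so could not all be positive; the paper's example survives that restriction.

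Your reading that $\Pstream_t$ may depend on $t$ is licensed by the indexing in the statement. Your log-loss realization checks out: with $p_a(0)=p_b(1)=q>1/2$ and $\varepsilon<1-q$, every increment equals $\log\frac{q}{1-q}>0$. Since $\E_\Q(a)=\E_\Q(b)$, the audit equality holds at every horizon, not only even ones.
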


\begin{proof}
Take a two-coordinate linear prediction problem. Let the audit loss depend only on coordinate 1 and the stream loss at time $t$ depend only on coordinate 2. A policy that improves coordinate 2 while degrading coordinate 1 has positive stream progress and non-improving audit loss. The stream-versus-audit comparison instantiates this separation in the ARC-style setting.
\end{proof}

\begin{proposition}[Reusable-panel memorization]
If a model class can interpolate arbitrary labels on the finite audit panel while assigning poor probabilities off-panel, then empirical audit loss can be driven to zero while population audit loss remains high.
\end{proposition}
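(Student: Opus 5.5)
The plan is to build the memorizing model explicitly, and then check both halves of the claim directly from the definitions of $\hE_n$ and $\E$. I would work with the floored log-loss $\ell_\varepsilon$ from the setup, so that every loss lies in $[0,R]$ and ``population loss remains high'' has a concrete meaning: close to the ceiling $R=-\log\varepsilon$.

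\textbf{Step 1 (the memorizing model).} Fix the panel $A_n=((x_1,y_1),\ldots,(x_n,y_n))$. Assume the relevant marginal of $\Q$ over $X$ is atomless, or at least gives the panel inputs total mass $m<1$. The interpolation hypothesis supplies a $\theta^\star\in\F$ with two properties:
\begin{itemize}
  \item $p_{\theta^\star}(y_i\mid x_i)=1$ on every panel point;
  \item $p_{\theta^\star}(y\mid x)\le\varepsilon$ for the true label, or for $\Q$-almost every off-panel pair, for example by placing all mass on a wrong label.
\end{itemize}
If a panel input repeats with conflicting labels, interpolation of the empirical conditional is the most one can ask. In that case ``zero'' is replaced by the empirical conditional entropy, which is the natural floor in the spirit of Theorem~\ref{thm:entropy-floor}. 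I would state the clean case, with distinct inputs, and note this caveat.

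\textbf{Step 2 (empirical loss).} Since $-\log 1=0$, we get $\hE_n(\theta^\star)=\frac1n\sum_i 0=0$.

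\textbf{Step 3 (population loss).} Split the expectation defining $\E(\theta^\star)$ into the on-panel and off-panel events. This gives $\E(\theta^\star)\ge(1-m)R$, which equals $R$ when $m=0$. So the population loss stays at the ceiling even though the empirical loss is zero.

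\textbf{Step 4 (connecting to false audit-CP).} Take any starting model $\theta_0$, for instance the uniform predictor, with $\hE_n(\theta_0)\approx\E(\theta_0)$. Then the two-step trajectory $\theta_0\to\theta^\star$ has
\[
\hat r^{\CP}=\hE_n(\theta_0)>0
\qquad\text{and}\qquad
r^{\CP}=\E(\theta_0)-\E(\theta^\star)\le \E(\theta_0)-(1-m)R,
\]
and the second quantity is nonpositive whenever $\E(\theta_0)\le(1-m)R$. This makes the consistency check with Theorem~\ref{thm:finite-audit} immediate: here $\Delta_n(\F)\ge|\hE_n(\theta^\star)-\E(\theta^\star)|\ge(1-m)R$, so the $2\Delta_n$ budget is essentially vacuous rather than contradicted.

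\textbf{Main obstacle.} The only delicate point is the measure-theoretic meaning of ``off-panel''. If $\Q$ has atoms at the panel inputs, memorization also helps on those atoms, and the population loss is then only bounded below by $(1-m)R$. I would handle this by stating the bound in that form rather than requiring exactly $R$.
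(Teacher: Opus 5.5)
Your proposal is correct and follows the same route as the paper's proof. Both construct a model that memorizes each panel point and predicts badly elsewhere, then use that the panel carries zero mass under a continuous $\Q$ (or small mass in the discrete case). Your panel-mass parameter $m$ and the bound $\E(\theta^\star)\ge(1-m)R$ under the floored loss make the paper's two cases quantitative, and your Step~4 link to Theorem~\ref{thm:finite-audit} matches the paper's remark that in this regime $\Delta_n$ becomes vacuous rather than the theorem failing.
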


\begin{proof}
Choose a model that memorizes each panel point and behaves as a bad predictor elsewhere. Since the panel has measure zero under a continuous population distribution, empirical loss can vanish while population loss is governed by off-panel behavior. In discrete settings, the same construction holds when the panel is a small subset of the support and the predictor is bad on the complement. The panel-memorization test and the holdout-reuse attack probe the white-box and scalar-feedback forms of this phenomenon.
\end{proof}

\section{Lean declaration excerpts}

\subsection{Telescoping and finite budget}

\begin{lstlisting}[style=leanstyle]
theorem cumCP_telescope {iota : Type*} (E : iota -> Real)
    (g : Nat -> iota) (T : Nat) :
    cumCP E g T = E (g 0) - E (g T) := by
  induction T with
  | zero => simp [cumCP]
  | succ n ih =>
      simp only [cumCP, Finset.sum_range_succ] at ih |- 
      rw [ih]; ring

theorem cumCP_le_of_lb {iota : Type*} (E : iota -> Real)
    (g : Nat -> iota) (T : Nat) {Emin : Real}
    (hlb : Emin <= E (g T)) :
    cumCP E g T <= E (g 0) - Emin := by
  rw [cumCP_telescope]; linarith
\end{lstlisting}

\subsection{Entropy floor}

\begin{lstlisting}[style=leanstyle]
theorem klDivFinitePMF_nonneg {H : Type*} [Fintype H]
    (Q P : FinitePMF H) [HasPositivePrior P] :
    0 <= klDivFinitePMF Q P := by
  ...

-- T3 as stated in the paper: the CONDITIONAL (input-averaged) entropy floor. The floor
-- H(Y|X) <= E_x H(Q, P (g T)) is DISCHARGED by conditional Gibbs
-- (condEntropy_le_condCrossEntropy), not taken as a hypothesis. The unconditional
-- entropy_floor_budget_crossEntropy is the single-input special case.
theorem cond_entropy_floor_budget_crossEntropy {iota X H : Type*} [Fintype X] [Fintype H]
    (mu : FinitePMF X) (Q : X -> FinitePMF H) (P : iota -> X -> FinitePMF H)
    [forall t, forall x, HasPositivePrior (P t x)] (g : Nat -> iota) (T : Nat) :
    cumCP (fun t => condCrossEntropyFinitePMF mu Q (P t)) g T
      <= condCrossEntropyFinitePMF mu Q (P (g 0)) - condEntropyFinitePMF mu Q :=
  cumCP_le_of_lb _ g T (condEntropy_le_condCrossEntropy mu Q (P (g T)))
\end{lstlisting}

\section{Experiment protocol summary}

\paragraph{Reward-signal ablation.}
The task source consists of 24 learnable ARC-TGI grid-to-grid generator families and 8 i.i.d. uniform distractor families. The model is trained for 5000 steps. The scheduler is held fixed across reward variants. Reward variants are signed audit-CP, prediction error, RND, ICM, uniform, round-robin, and learnable-only oracle. Metrics are active-cell audit accuracy, floored audit cross-entropy, and distractor allocation fraction.

\paragraph{Finite-audit concentration.}
Sample finite families of predictors and audit panels at several $n$. Compute empirical uniform deviation between panel loss and a larger fresh-audit estimate. Fit a log-log slope of deviation against $n$.

\paragraph{Scalar-feedback attack.}
An adaptive attacker receives scalar audit feedback from a reusable panel and selects candidate updates. We compare naive reusable release against fresh subsampling, ladder release, rounded release, and one-shot release. The reported gap is reusable CP minus fresh CP; success is crossing $2\Delta_n$.

\paragraph{Boundary tests.}
The stream-versus-audit comparison contrasts stream-scored CP with sealed-audit CP under increasing stream bias. The clipping comparison cycles between states to compare signed and clipped progress. The panel-memorization test trains directly on a reusable panel to probe high-capacity memorization. The calibration probe applies temperature scaling and checks that hard accuracy is invariant while NLL/ECE move.

\end{document}